\documentclass[11pt, twoside]{article}
\usepackage[utf8]{inputenc}
\usepackage[T1]{fontenc}
\usepackage{lmodern}
\usepackage{booktabs}
\usepackage[dvipsnames]{xcolor}
\usepackage{pifont}
\definecolor{darkgreen}{rgb}{0.0, 0.5, 0.0}

\usepackage{amsmath, amssymb, amsthm, mathtools}
\usepackage{bm}
\usepackage{comment}
\usepackage{geometry}
\usepackage[square,numbers]{natbib}
\usepackage{graphicx}
\usepackage{xcolor}
\usepackage{color}
\usepackage{booktabs}

\usepackage{algorithm}
\usepackage{algpseudocode}
\usepackage{float}
\usepackage{placeins}

\usepackage{enumitem}
\setlist[itemize]{leftmargin=*}
\setlist[enumerate]{leftmargin=*}

\usepackage[hidelinks]{hyperref}
\usepackage{url}

\numberwithin{equation}{section}
\newtheorem{theorem}{Theorem}[section]

\newtheorem{remark}[theorem]{Remark}

\newcommand{\Prob}{\mathbb{P}}
\newcommand{\calG}{\mathcal{G}}
\newcommand{\Ghat}{\widehat{\mathcal{G}}}

\newcommand{\abs}[1]{\left\lvert #1 \right\rvert}

\newcommand{\Tin}{T_{\mathrm{in}}}
\newcommand{\Tout}{T_{\mathrm{out}}}

\newcommand{\veps}{\varepsilon}

\title{Operator learning for the 2D incompressible Navier-Stokes equations: a conformal prediction approach in the data-scarce regime}

\author{
Weinan Wang\textsuperscript{1}\quad
Bowen Gang\textsuperscript{2}\quad
Hao Deng\textsuperscript{2}
}

\date{\today}

\begin{document}
\maketitle

\noindent
\textsuperscript{1}Department of Mathematics, University of Oklahoma, Norman, OK, USA\\
\textsuperscript{2}Department of Statistics and Data Science, School of Management, Fudan University, Shanghai, China\\[2mm]
\noindent
Email: \texttt{bgang@fudan.edu.cn}, \texttt{ww@ou.edu}, \texttt{haodeng0617@gmail.com}

\begin{abstract}
In this paper, we propose a perturbation-based conformal prediction framework for uncertainty quantification in operator learning, with a focus on the 2D Navier--Stokes equations. While neural operators provide fast surrogates for expensive PDE solvers, they do not by themselves provide calibrated uncertainty for spatiotemporal field predictions. Our approach wraps a trained Fourier Neural Operator (FNO) with split conformal prediction and constructs the local uncertainty scale by comparing the predictions of two operators trained on nearly identical datasets: one on the original labels and one on labels perturbed by small Gaussian noise.  We consider this procedure in the data-scarce regime, where the total label budget is fixed and methods that require a separate uncertainty network must divide training data between multiple models.  On the 2D Navier--Stokes benchmark, the perturbation-based method produces substantially narrower conformal bands than 
existing methods under matched total data budgets while maintaining the target simultaneous coverage. These results suggest that perturbation sensitivity is a practical and sample-efficient uncertainty proxy for conformalized neural operators.

\end{abstract}

\tableofcontents

\section{Introduction}\label{sec:intro}

In this paper, we consider operator learning for the 2D incompressible Navier-Stokes equations with a perturbation-based uncertainty scale. Neural operators provide fast surrogates for time-dependent PDE solution maps,
but their point predictions do not quantify the spatially and temporally varying
error of the learned surrogate.  This limitation is especially important for
Navier--Stokes rollouts, where a small number of training trajectories must often
support predictions over a high-dimensional spatiotemporal grid. 
Classical numerical PDE solvers, including finite element, finite volume, and spectral methods, are accurate and well established, yet their computational cost becomes prohibitive in regimes requiring fine spatial resolution, long temporal horizons, or repeated evaluations.
 \citep{quarteroni1994numerical,leveque2002finite,trefethen2000spectral}. For incompressible Navier--Stokes, classical well-posedness and numerical stability considerations motivate grid-based solvers that are accurate but computationally intensive at fine resolution \citep{temam2024navier}. Operator learning offers a complementary, data-driven route: rather than solving a PDE from scratch for each instance, one learns a mapping from inputs to outputs \citep{kovachki2024operator,subedi2025operator,boulle2024guide,nelsen2024randomfeatures}. Neural operators formalize learning resolution-invariant maps between function spaces, making them attractive surrogates for parametric PDE solution operators \citep{kovachki2023neural,winovich2025active}. Neural operator architectures such as the Fourier Neural Operator \citep{li2020fourier} and DeepONet \citep{lu2021learning} are designed to approximate mappings between discretized function spaces and can generalize across discretizations \citep{subedi2025controlling,millard2025split,kovachki2023neural}. This operator-learning viewpoint is also closely related to recent work on inverse problems, including neural inverse operators, probabilistic perspectives on operator learning for inverse problems, stochastic inverse problems, and PDE-constrained inverse-problem formulations \citep{molinaro2023neural,nelsen2025inverse,li2024stochastic,vanleeuwen2025constraint}. In addition, the effectiveness of learned operators depends strongly on the amount and distribution of training data, a point that has recently been emphasized in training-distribution selection and out-of-distribution analysis \citep{guerra2025learning}. However, because neural operators are approximate models trained on finite data, their errors vary across regimes, forecast horizons, and input distributions. In safety-critical or decision-making settings, point predictions are therefore insufficient, and one needs uncertainty quantification (UQ) for \emph{structured outputs}, namely complete spatiotemporal fields \citep{majda2002vorticity,guo2017calibration,ovadia2019can,gopakumar2025calibrated,magnaniKERH25,lakshminarayanan2017simple,bulte2025probabilistic}.

Conformal prediction (CP) provides distribution-free, finite-sample marginal coverage guarantees under exchangeability \citep{vovk2005algorithmic,lei2018distribution,angelopoulos2021gentle}, and recent work has adapted the framework to operator learning for functional outputs \citep{ma2024calibrated,moya2025conformalized,bulte2025probabilistic,yu2025conformal}. The standard validity guarantee is strictly marginal. This marginal guarantee does not ensure uniform performance across specific inputs or individual output coordinates. In practice, we want adaptive prediction bands that maintain approximately uniform conditional coverage at each spatiotemporal point, expanding in regions of high predictive uncertainty and contracting where the surrogate is precise. Achieving exact distribution-free conditional coverage is generally impossible for nontrivial prediction sets \citep{foygel2021limits}. This limitation motivates the pursuit of simultaneous coverage over all output coordinates, combined with a carefully designed nonconformity score to approximate conditional calibration. The adaptivity of the resulting bands depends critically on this score. A well-scaled nonconformity function can produce prediction sets that satisfy marginal or simultaneous validity while still responding to local difficulty. Naive residual-based scores, which normalize absolute residuals by a global or per-coordinate standard deviation, typically yield constant-width bands \citep{papadopoulos2002inductive,lei2014distribution}. Although statistically valid, constant-width bands are inefficient and physically uninformative for fluid dynamics. Reliable uncertainty quantification requires bands that widen in turbulent, high-gradient regions and contract in smoother laminar areas. Existing adaptive methods frequently address this by training auxiliary error models, such as quantile regression networks, to estimate local uncertainty scales \citep{ma2024calibrated,romano2019conformalized}. This approach introduces additional optimization complexity and increases the risk of overfitting the uncertainty model to the calibration set.

 The central idea of this present paper is to measure how much the learned operator changes when the training labels are slightly corrupted: if a particular output coordinate varies substantially under this perturbation, then the corresponding prediction should receive a wider uncertainty budget. Concretely, we train two Fourier Neural Operators with identical architectures, one on the original training data and one on perturbed labels, and use their pointwise disagreement as the raw uncertainty measure. We then conformalize this measure with a max-type nonconformity score to obtain simultaneous coverage over the full spatiotemporal grid. Our experiments focus on the data-scarce regime, where this construction has a natural advantage over methods such as UQNO that must reserve a disjoint subset of labels to train a secondary uncertainty network. Across the Navier--Stokes benchmark, the resulting conformal bands are consistently narrower than those produced by MC Dropout, Laplace approximation, and UQNO under matched total data budgets.
\subsection{Paper organization}
The rest of the paper is organized as follows. Section~\ref{sec:problem} formulates the finite-grid prediction problem and Section~\ref{sec:conformal_background} reviews
the split-conformal guarantee.  Sections~\ref{sec:method}--\ref{sec:expt} present the perturbation-scaled
score, implementation details, and numerical results for the Navier--Stokes
benchmark.

\section{Problem Formulation}\label{sec:problem}
We consider the two-dimensional incompressible Navier--Stokes equations on the periodic domain
$\mathbb{T}^2$ in vorticity form
\begin{equation}\label{eq:NS}
	\partial_t \omega + v\cdot \nabla \omega =  \nu\Delta \omega + f(x),
	\qquad  \ t\in (0,T],
\end{equation}
together with incompressibility $\nabla\cdot v=0$ and initial condition $\omega(x,0)=w_0(x)$.
Here $\omega=\omega(x,t)=\nabla \times v$ denotes vorticity, $v=v(x,t)$ denotes velocity vector field, $\nu>0$ is the viscosity, and $f$ is an external forcing. Under periodic boundary conditions, $v$ can be recovered from $\omega$ through a stream function $\psi$ satisfying $-\Delta \psi = \omega$ and $v=\nabla^\perp \psi$. In the sequel, $u$ and $\mathbf{u}$ denote the future vorticity trajectory and its discretization, not the velocity field.

For a fixed forcing $f$, the PDE induces a solution map that propagates the state forward in time.
In many operator-learning benchmarks, the goal is not to predict a single scalar but to learn the \emph{operator} that maps an input specification (e.g.\ an initial condition) to a future field trajectory.
To reflect the data used in our experiments, we formulate the solution operator on a finite input horizon $\Tin$ and an output horizon $\Tout$ as follows.
Let $\{t_\ell\}_{\ell=1}^{\Tin+\Tout}$ be discrete times with uniform step size $\Delta t$.
Define the continuous-time \textit{input history} and \textit{future rollout} as the collections
\[
a := \{w(\cdot,t_1),\ldots,w(\cdot,t_{\Tin})\},
\qquad
u := \{w(\cdot,t_{\Tin+1}),\ldots,w(\cdot,t_{\Tin+\Tout})\}.
\]
We view $a$ and $u$ as elements of suitable function spaces (e.g.\ $L^2(\mathbb{T}^2)$ at each time slice).
The corresponding (unknown) solution operator is
\begin{equation}\label{eq:operator_mapping}
	\calG:\ a \mapsto u.
\end{equation}
The unknown operator $\calG$ depends implicitly on $\nu$, $f$, and on the time grid. In dataset generation, randomness enters through the sampling of initial conditions, inducing a distribution over $(a,u)$ pairs.
Ideally, we would like to construct an uncertainty set $\mathcal{C}$ for the operator $\calG$ such that 
\begin{equation}\label{eq:ideal_goal}
		\Prob\big(\mathcal{G} \in \mathcal{C}\big) \geq 1 - \alpha
\end{equation}
for a prescribed confidence level $1-\alpha$. In practice, however, we work with discretized representations of these continuous objects.

\subsection{Discretization and Uncertainty Quantification}
\medskip

We use non-bold symbols such as $a,u$ for the continuous fields or trajectories and bold symbols such as $\mathbf a,\mathbf u$ for their vectorized discretizations. The observed data are
\[
\mathcal D_{\rm obs}=\{(\mathbf a_i,\mathbf u_i)\}_{i=1}^M.
\]
The subscript indexes the sample, while superscripts index vector entries; for example, $\mathbf a_i^k$ denotes the $k$th component of $\mathbf a_i$.

The operator learning problem is defined on a finite spatial grid $\mathcal{D}_h = \{x_1, \dots, x_N\} \subset \mathbb{T}^2$ (e.g., $N=n_x n_y$ for a uniform mesh) and a discrete output timeline $\mathcal{T}_{\mathrm{out}} = \{t_{\Tin+1}, \dots, t_{\Tin+\Tout}\}$.
We define the finite-dimensional input vector $\mathbf{a} \in \mathbb{R}^{d_{\mathrm{in}}}$ (where $d_{\mathrm{in}} = N \Tin$) by flattening the vorticity field values over the grid $\mathcal{D}_h$ at the $\Tin$ input time steps. Similarly, we define the target output vector $\mathbf{u} \in \mathbb{R}^{d_{\mathrm{out}}}$ (where $d_{\mathrm{out}} = N \Tout$) by stacking the field values over $\mathcal{D}_h$ at the $\Tout$ output time steps.
With this convention, the discrete ground-truth operator $\mathcal{G}$ acts as a mapping between finite-dimensional Euclidean spaces:
\begin{equation}\label{eq:ghat_map}
	\mathcal{G}: \mathbb{R}^{d_{\mathrm{in}}} \to \mathbb{R}^{d_{\mathrm{out}}}.
\end{equation}
Given a new query input $\mathbf{a}_{\mathrm{new}}$, our objective is to construct an uncertainty set $\mathcal{C}(\mathbf{a}_{\mathrm{new}}) \subseteq \mathbb{R}^{d_{\mathrm{out}}}$ that satisfies the following marginal coverage guarantee:
\begin{equation}\label{eq:goal}
	\mathbb{P}\left( \mathbf{u}_{\mathrm{new}} \in \mathcal{C}(\mathbf{a}_{\mathrm{new}}) \right) \geq 1 - \alpha,
\end{equation}
for a prespecified miscoverage rate $\alpha \in (0,1)$. We assume that the new data point $(\mathbf{a}_{\mathrm{new}}, \mathbf{u}_{\mathrm{new}})$ and the observed data $\mathcal{D}_{\text{obs}}$ are exchangeable \citep{tibshirani2019conformal,barber2023conformal}.

\begin{remark}[Pointwise vs. Simultaneous Coverage]
	It is important to distinguish the goal in \eqref{eq:goal} from that in \eqref{eq:ideal_goal}. Our formulation constructs a prediction set for the output of $\mathcal{G}$ at a \emph{specific, random} test point. A rigorous uncertainty set for the operator $\mathcal{G}$ itself would require a \emph{simultaneous coverage} guarantee. That is, one would seek a set-valued mapping $\mathcal{C}: \mathbb{R}^{d_{\mathrm{in}}} \to 2^{\mathbb{R}^{d_{\mathrm{out}}}}$ satisfying:
	\begin{equation}\label{eq:simultaneous_goal}
		\mathbb{P}_{\mathcal{D}_{\text{obs}}}\left( \mathcal{G}(\mathbf{a}) \in \mathcal{C}(\mathbf{a}), \quad \forall \mathbf{a} \in \mathbb{R}^{d_{\mathrm{in}}} \right) \geq 1 - \alpha.
	\end{equation}
	The condition in \eqref{eq:simultaneous_goal} requires $\mathcal{C}(\mathbf{a})$ to contain the ground-truth trajectory $\mathcal{G}(\mathbf{a})$ for {every} possible input $\mathbf{a}$ in the domain simultaneously. While theoretically significant, achieving such uniform convergence guarantees requires strict assumptions on the complexity of $\mathcal{G}$ and is computationally demanding; we therefore focus on achieving \eqref{eq:goal}.
	\end{remark}

\section{Preliminaries: Split Conformal Prediction}\label{sec:conformal_background}

We briefly review the split conformal prediction framework, a distribution-free method for constructing valid predictive uncertainty sets. Let $\mathcal{D}_n = \{(X_i, Y_i)\}_{i=1}^n$ be a dataset of $n$ exchangeable input-output pairs drawn from a joint distribution $P_{XY}$ over $\mathcal{X} \times \mathcal{Y}$. For a new test point $X_{n+1}$, our goal is to construct a prediction set $\mathcal{C}(X_{n+1}) \subseteq \mathcal{Y}$ that contains the unknown response $Y_{n+1}$ with probability at least $1-\alpha$.

In the split conformal protocol, the dataset $\mathcal{D}_n$ is randomly partitioned into two disjoint subsets: a \emph{proper training set} indexed by $\mathcal{I}_{\text{train}}$ and a \emph{calibration set} indexed by $\mathcal{I}_{\text{cal}}$, such that $\mathcal{I}_{\text{train}} \cup \mathcal{I}_{\text{cal}} = \{1, \dots, n\}$ \citep{papadopoulos2002inductive}.
The training data \(\{(X_i,Y_i):i\in\mathcal{I}_{\mathrm{train}}\}\) are used only to learn a nonconformity score function
\[
S:\ \mathcal{X}\times\mathcal{Y}\to\mathbb{R},
\]
which quantifies how atypical a candidate pair \((x,y)\) is. Once $S$ is fitted on the training set it is treated as fixed during calibration. Nonconformity scores are then computed on the calibration data:
\[
V_i \;:=\; S(X_i,Y_i),\qquad i\in\mathcal{I}_{\mathrm{cal}}.
\]
Write \(\{V_{(1)}\le V_{(2)}\le\cdots\le V_{(n_{\mathrm{cal}})}\}\) for the ordered calibration scores and set
\[
k \;:=\; \big\lceil (|\mathcal{I}_{\mathrm{cal}}|+1)(1-\alpha)\big\rceil,\qquad
\widehat{Q}_{1-\alpha}\;:=\; V_{(k)}.
\]
When $k>n_{\mathrm{cal}}$, the usual split-conformal convention is to set $\widehat{Q}_{1-\alpha}=+\infty$; equivalently, finite nontrivial bands at level $1-\alpha$ require $\lceil (n_{\mathrm{cal}}+1)(1-\alpha)\rceil\le n_{\mathrm{cal}}$. 
The split conformal prediction set for a new input \(x\) is then
\begin{equation}\label{eq:cp_set}
	\mathcal{C}(x)
	\;=\;
	\{\,y\in\mathcal{Y}:\ S(x,y)\le \widehat{Q}_{1-\alpha}\,\}.
\end{equation}
We have the following theoretical guarantee. The proof can be found in \citet{angelopoulos2021gentle}.
\begin{theorem}[Marginal Coverage]\label{thm:cp_coverage}
	Assume that the data points $\{(X_i, Y_i)\}_{i=1}^{n+1}$ are exchangeable. Let $\mathcal{C}(X_{n+1})$ be the set constructed via split conformal prediction as defined in \eqref{eq:cp_set}. Then, for any distribution $P_{XY}$ and any choice of score function $S$,
	\begin{equation}
		\mathbb{P}\left( Y_{n+1} \in \mathcal{C}(X_{n+1}) \right) \geq 1 - \alpha.
	\end{equation}
\end{theorem}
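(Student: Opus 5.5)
The plan is the standard exchangeability-of-scores argument, carried out conditionally on the training split. First condition on the proper training set $\{(X_i,Y_i): i\in\mathcal{I}_{\mathrm{train}}\}$. The score function $S$ is fitted only on these points, so after conditioning it is a fixed deterministic map $\mathcal{X}\times\mathcal{Y}\to\mathbb{R}$. The calibration points together with the test point, $\{(X_i,Y_i): i\in\mathcal{I}_{\mathrm{cal}}\}\cup\{(X_{n+1},Y_{n+1})\}$, remain exchangeable conditionally on the training points. This holds because the split is chosen randomly and independently of the data, and exchangeability of the full sequence is inherited by any sub-collection after conditioning on the complementary coordinates. Applying the fixed map $S$ coordinatewise preserves exchangeability. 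Hence, writing $m=n_{\mathrm{cal}}$, the scores $V_1,\dots,V_m,V_{m+1}$ with $V_{m+1}:=S(X_{n+1},Y_{n+1})$ are exchangeable.

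Next, rewrite the coverage event. By \eqref{eq:cp_set}, $Y_{n+1}\in\mathcal{C}(X_{n+1})$ if and only if $V_{m+1}\le V_{(k)}$, where $V_{(k)}$ is the $k$-th smallest of the $m$ calibration scores and $k=\lceil (m+1)(1-\alpha)\rceil$. If $k>m$, then $\widehat{Q}_{1-\alpha}=+\infty$ and coverage holds trivially, so assume $k\le m$. The key deterministic observation is that $V_{m+1}>V_{(k)}$ holds if and only if $V_{m+1}$ is strictly greater than at least $k$ of the $m+1$ scores. Equivalently, the rank of $V_{m+1}$ among all $m+1$ scores is at least $k+1$, with ties broken in favour of coverage.

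The remaining step is a symmetry argument, and it is the only delicate point. Ties make exact uniformity of ranks fail, so I would not argue via ranks being uniform. Instead, define for each $j$ the event $E_j=\{V_j>\text{the }k\text{-th smallest of }V_1,\dots,V_{m+1}\}$. At most $m+1-k$ indices can satisfy $E_j$, since only scores strictly above the $k$-th order statistic qualify. By exchangeability all $\Prob(E_j)$ are equal, so
\[
\Prob(E_{m+1})=\frac{1}{m+1}\,\mathbb{E}\Big[\sum_j \mathbf 1_{E_j}\Big]\le \frac{m+1-k}{m+1}\le \alpha ,
\]
using $k\ge (m+1)(1-\alpha)$. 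One also checks that $V_{m+1}\le V_{(k)}$ (the $k$-th smallest among the calibration scores alone) is implied by $V_{m+1}$ not exceeding the $k$-th smallest of all $m+1$ scores. This follows because adding $V_{m+1}$ to the list can only lower order statistics in a way consistent with this implication, which requires a brief case check.

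Finally, the conditional bound $\ge 1-\alpha$ integrates over the training data by the tower property, giving the unconditional claim for any $P_{XY}$ and any $S$.
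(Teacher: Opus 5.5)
Your argument is correct. The paper gives no proof of its own for this statement; it defers to Angelopoulos and Bates. Their argument takes the score as already fitted on separate data and assumes the calibration and test scores are almost surely distinct. It then uses that the rank of the test score among all $m+1$ scores is uniform, so that $\Prob(V_{m+1}\le V_{(k)})=k/(m+1)\ge 1-\alpha$; the tied case is referred to other work.

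Your route differs in two respects. First, you derive exchangeability of the calibration and test scores from the hypothesis as actually stated (exchangeability of all $n+1$ points, training points included) by conditioning on the training split. This is the same device the paper uses in its proof of Theorem~\ref{thm:simultaneous_coverage}. Second, and more substantively, you replace uniform ranks by the counting bound $\sum_j \mathbf{1}_{E_j}\le m+1-k$ combined with $\Prob(E_j)=\Prob(E_{m+1})$. This needs no continuity, so it genuinely covers the claim for \emph{any} score $S$ and any $P_{XY}$, including discrete responses or scores with atoms, where ties occur with positive probability. The no-ties rank argument, in exchange, yields the exact value $k/(m+1)$ and hence the upper bound $1-\alpha+1/(m+1)$, which is not needed for the inequality claimed here.

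The ``brief case check'' you flag is a one-liner. Let $W_{(k)}$ denote the $k$-th smallest of all $m+1$ scores. Appending $V_{m+1}$ to the calibration scores cannot increase the $k$-th order statistic, so on the complement of $E_{m+1}$ you get $V_{m+1}\le W_{(k)}\le V_{(k)}$.
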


%



\section{Methodology: Perturbation-Based Nonconformity Score}\label{sec:method}

The marginal coverage guarantee of split conformal prediction holds regardless of the choice of the nonconformity score function, provided the data are exchangeable. However, the efficiency of the resulting prediction bands depends critically on the uncertainty scale used to normalize the residuals. A naive choice such as the unnormalized residual $S(\mathbf{a}, \mathbf{u}) = \|\mathbf{u} - \widehat{\mathcal{G}}(\mathbf{a})\|_\infty$ produces constant-width bands, which are typically too conservative for operator-valued predictions on heterogeneous spatiotemporal domains.

To construct adaptive uncertainty sets, we employ a normalized max-type nonconformity score of the form
\begin{equation}\label{eq:normalized_score_template}
	S(\mathbf{a}, \mathbf{u}) =
	\max_{k\le d_{\mathrm{out}}}
	\frac{\abs{\mathbf{u}^{k} - \widehat{\mathcal{G}}(\mathbf{a})^{k}}}
	{\sigma(\mathbf{a})^{k}},
\end{equation}
where $k$ indexes the spatiotemporal output coordinates. The efficiency of the resulting conformal bands depends critically on the local scale $\sigma(\mathbf{a})^{k}$, which should reflect the predictive uncertainty at each output coordinate. An effective scale must adapt to spatially varying difficulty: it should assign wider prediction bands to coordinates where the surrogate is sensitive to training noise, and narrower bands where the operator response remains stable. Our construction is guided by a stability principle: if a particular output coordinate varies substantially under small perturbations of the training labels, the corresponding prediction should be treated as uncertain; conversely, stable coordinates should receive a smaller scale. We operationalize this principle by measuring the pointwise disagreement between two neural operators trained on nearly identical datasets. The explicit construction of $\sigma(\mathbf{a})$ from this disagreement, together with the necessary spatial smoothing and floor stabilization, is detailed in the following subsection.

\subsection{Perturbation-Based Uncertainty Scale and Calibration}\label{sec:method:construction}
Let $\{(\mathbf{a}_i,\mathbf{u}_i)\}_{i\in\mathcal{I}_{\mathrm{train}}}$ denote the training set. We first train a base neural operator $\widehat{\mathcal{G}}$ on the original pairs $(\mathbf{a}_i,\mathbf{u}_i)$. We then construct perturbed labels by adding independent Gaussian noise to the outputs:
\begin{equation}\label{eq:perturbed_labels}
	\widetilde{\mathbf{u}}_i = \mathbf{u}_i + \boldsymbol{\veps}_i,
	\qquad
	\veps_i^{k}\overset{\mathrm{iid}}{\sim}\mathcal{N}(0,\sigma_{\veps}^{2}),
	\qquad
	\sigma_{\veps} = c \times \mathrm{std}(\mathbf{u}_{\mathrm{train}}),
\end{equation}
with default $c=0.05$ and std denotes the standard deviation. A second neural operator, denoted $\widehat{\mathcal{G}}_{\veps}$, is trained on the perturbed pairs $(\mathbf{a}_i,\widetilde{\mathbf{u}}_i)$ using the same architecture and optimization protocol as the base model. Both operators use the full proper training set, so no disjoint subset must be reserved to train a separate model to measure uncertainty. For a new input $\mathbf{a}$, we define the raw perturbation disagreement at output coordinate $k$ by
\begin{equation}\label{eq:raw_disagreement}
	d(\mathbf{a})^{k}
	:=
	\abs{\widehat{\mathcal{G}}(\mathbf{a})^{k}
		-
		\widehat{\mathcal{G}}_{\veps}(\mathbf{a})^{k}},
	\qquad k = 1,\dots,d_{\mathrm{out}}.
\end{equation}
This quantity measures the sensitivity of the learned prediction to small perturbations of the training labels. We intend to use this disagreement as a local scale in a normalized nonconformity score, which divides the absolute residual at each coordinate by the corresponding scale. Because the scale appears in the denominator and the final score is defined as a maximum over all output coordinates, isolated near-zero values of $d(\mathbf{a})^{k}$ would produce arbitrarily large ratios for individual calibration samples.  To prevent this, we apply two regularization steps.

First, we smooth the disagreement map spatially at each output time step. Writing a coordinate as $k \equiv (s_{1}, s_{2}, t)$, we define
\begin{equation}\label{eq:smoothed_disagreement}
	\bar d(\mathbf{a})^{(s_{1}, s_{2}, t)}
	:=
	\frac{1}{|\mathcal{N}_{K}(s_{1}, s_{2})|}
	\sum_{(s_{1}', s_{2}') \in \mathcal{N}_{K}(s_{1}, s_{2})}
	d(\mathbf{a})^{(s_{1}', s_{2}', t)},
\end{equation}
where $\mathcal{N}_{K}(s_{1}, s_{2})$ is a $K \times K$ spatial neighborhood centered at $(s_{1}, s_{2})$ with appropriate boundary handling. In the experiments, we use a $15 \times 15$ smoothing window. Second, to ensure the denominator remains uniformly bounded away from zero, we impose a data-dependent floor computed from the proper training set:
\begin{equation}\label{eq:sigma_def}
	\sigma(\mathbf{a})^{k}
	:=
	\max\!\bigl\{\bar d(\mathbf{a})^{k},\,\tau_{0}\bigr\},
	\qquad
	\tau_{0}
	:=
	0.1 \times
	\mathrm{median}\Bigl\{
	d(\mathbf{a}_{i})^{k}
	:
	i\in\mathcal{I}_{\mathrm{train}},\;
	k = 1,\dots,d_{\mathrm{out}}
	\Bigr\}.
\end{equation}

\begin{remark}[Stability Interpretation]
	The perturbation $\boldsymbol{\veps}_{i}$ may be interpreted as a proxy for numerical error in the solver-generated training outputs, or more generally as a small perturbation of the response variable \citep{bishop1995training}. Under this interpretation, the scale $\sigma(\mathbf{a})^{k}$ measures how strongly output coordinate $k$ changes when the training labels are slightly corrupted. Large values of $\sigma(\mathbf{a})^{k}$ indicate locally unstable or hard-to-learn predictions, whereas small values indicate stable predictions that are comparatively insensitive to perturbations in the training data.
\end{remark}

\paragraph{Perturbation-Scaled Nonconformity Score.}
With the stabilized uncertainty map $\sigma(\mathbf{a})$ defined above, we construct the conformal score by normalizing the calibration residuals. For each calibration sample $i \in \mathcal{I}_{\text{cal}}$, let $\mathbf{r}_{i} = \mathbf{u}_{i} - \widehat{\mathcal{G}}(\mathbf{a}_{i})$. We then define
\begin{equation}\label{eq:jacobian_score}
	S_{i}
	=
	\max_{k \in \{1,\dots,d_{\mathrm{out}}\}}
	\frac{\abs{\mathbf{r}_{i}^{k}}}{\sigma(\mathbf{a}_{i})^{k}}.
\end{equation}
By taking the maximum over all output coordinates, this score targets simultaneous coverage over the entire spatiotemporal grid.

\paragraph{Calibration and Prediction Sets.}
Following split conformal prediction, let $S_{(1)} \le \cdots \le S_{(n)}$ denote the ordered calibration scores, where $n = |\mathcal{I}_{\mathrm{cal}}|$. For target coverage level $1 - \alpha$, we define

\begin{equation}\label{eq:quantile_def}
    \widehat{Q}_{1-\alpha} := S_{(\lceil (n+1)(1-\alpha)\rceil)}.
\end{equation}
For a new test input $\mathbf{a}_{\mathrm{new}}$, we compute the base prediction $\widehat{\mathbf{u}}_{\mathrm{new}} = \widehat{\mathcal{G}}(\mathbf{a}_{\mathrm{new}})$ together with the perturbation-derived scale $\sigma(\mathbf{a}_{\mathrm{new}})$, and define the prediction bands coordinate-wise by
\begin{equation}\label{eq:final_bands}
    \mathcal{C}(\mathbf{a}_{\mathrm{new}})^{k}
    =
    \left[
    \widehat{\mathbf{u}}_{\mathrm{new}}^{k}
    -
    \widehat{Q}_{1-\alpha}\sigma(\mathbf{a}_{\mathrm{new}})^{k},
    \quad
    \widehat{\mathbf{u}}_{\mathrm{new}}^{k}
    +
    \widehat{Q}_{1-\alpha}\sigma(\mathbf{a}_{\mathrm{new}})^{k}
    \right].
\end{equation}
We have the following theoretical guarantee.

\begin{theorem}\label{thm:simultaneous_coverage}
    Assume the calibration data $\{(\mathbf{a}_i,\mathbf{u}_i)\}_{i \in \mathcal{I}_{\mathrm{cal}}}$ and the test point $(\mathbf{a}_{\mathrm{new}},\mathbf{u}_{\mathrm{new}})$ are exchangeable. Let $\mathcal{C}(\mathbf{a}_{\mathrm{new}})$ be defined by \eqref{eq:final_bands}. Then
    \begin{equation}
        \Prob\!\left(
        \mathbf{u}_{\mathrm{new}}^{k} \in
        \mathcal{C}(\mathbf{a}_{\mathrm{new}})^{k},
        \quad
        \forall k \in \{1,\dots,d_{\mathrm{out}}\}
        \right)
        \ge 1 - \alpha.
    \end{equation}
\end{theorem}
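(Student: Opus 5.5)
The plan is to reduce the claim to Theorem~\ref{thm:cp_coverage} by showing that the simultaneous event is exactly the event that the test score is at most the conformal quantile. Write $S(\mathbf{a},\mathbf{u}) := \max_{k} \abs{\mathbf{u}^k - \widehat{\mathcal{G}}(\mathbf{a})^k}/\sigma(\mathbf{a})^k$, so that the calibration scores \eqref{eq:jacobian_score} are $S_i = S(\mathbf{a}_i,\mathbf{u}_i)$.

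The first step is to check that $S$ is a fixed, well-defined function once we condition on the proper training set. The maps $\widehat{\mathcal{G}}$ and $\widehat{\mathcal{G}}_{\veps}$, the perturbation noise $\boldsymbol{\veps}_i$, the smoothing window, and the floor $\tau_0$ in \eqref{eq:sigma_def} are all determined by $\mathcal{I}_{\mathrm{train}}$ and auxiliary randomness independent of the calibration and test points. The floor gives $\sigma(\mathbf{a})^k \ge \tau_0$. I will assume $\tau_0>0$; the degenerate case $\tau_0=0$ can be handled by the convention $0/0=0$, $c/0=\infty$, and the argument below still goes through. Conditionally on the training data and the noise, the calibration pairs and the test pair remain exchangeable, so the scores $S_i$ for $i\in\mathcal{I}_{\mathrm{cal}}$ together with $S_{\mathrm{new}} := S(\mathbf{a}_{\mathrm{new}},\mathbf{u}_{\mathrm{new}})$ are exchangeable as well.

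The second step is the set identity. Since $\sigma(\mathbf{a}_{\mathrm{new}})^k>0$, for each $k$ we have $\mathbf{u}^k_{\mathrm{new}}\in\mathcal{C}(\mathbf{a}_{\mathrm{new}})^k$ if and only if $\abs{\mathbf{u}^k_{\mathrm{new}}-\widehat{\mathbf{u}}^k_{\mathrm{new}}}/\sigma(\mathbf{a}_{\mathrm{new}})^k\le\widehat{Q}_{1-\alpha}$. Taking the intersection over $k$, the simultaneous event equals $\{\max_k(\cdot)\le\widehat{Q}_{1-\alpha}\}=\{S_{\mathrm{new}}\le\widehat{Q}_{1-\alpha}\}$. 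This is precisely the event $\mathbf{u}_{\mathrm{new}}\in\mathcal{C}(\mathbf{a}_{\mathrm{new}})$ for the set \eqref{eq:cp_set} built from $S$, where $\widehat{Q}_{1-\alpha}$ in \eqref{eq:quantile_def} matches the quantile of Section~\ref{sec:conformal_background}, with $+\infty$ if $k>n$.

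The final step applies Theorem~\ref{thm:cp_coverage} conditionally on the training data, giving conditional probability at least $1-\alpha$, and then removes the conditioning with the tower property. If one prefers a self-contained argument, the standard rank argument works directly: by exchangeability, the rank of $S_{\mathrm{new}}$ among the $n+1$ scores is stochastically no larger than uniform, with ties only helping. Hence $\Prob(S_{\mathrm{new}}\le S_{(\lceil (n+1)(1-\alpha)\rceil)})\ge\lceil (n+1)(1-\alpha)\rceil/(n+1)\ge1-\alpha$.

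The main point needing care is the first step: confirming that $\sigma$, and in particular $\tau_0$ and the perturbed model, depend on no calibration or test data, so that the score function is genuinely fixed. Once that is settled, the rest is the routine identity plus Theorem~\ref{thm:cp_coverage}.
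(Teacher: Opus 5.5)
Your proof is correct and follows the same route as the paper. You condition on the training data and the perturbation noise so that $\sigma$ and the max-type score are fixed, use exchangeability of the scores together with the split-conformal guarantee, identify $\{S_{\mathrm{new}}\le\widehat{Q}_{1-\alpha}\}$ with simultaneous containment, and remove the conditioning by the tower property. Your handling of $\tau_0>0$, the $+\infty$ quantile convention and the self-contained rank argument are refinements the paper leaves implicit; in the $\tau_0=0$ aside you would also need the convention $\infty\cdot 0=\infty$ in the band endpoints when $\widehat{Q}_{1-\alpha}=+\infty$.
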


\begin{proof}
	Let $\mathcal{D}_{\mathrm{train}} = \{(\mathbf{a}_i, \mathbf{u}_i)\}_{i \in \mathcal{I}_{\mathrm{train}}}$ denote the training set, and let $\boldsymbol{\varepsilon}$ denote the collection of Gaussian perturbations used to construct the noisy labels for the second operator $\widehat{\mathcal{G}}_{\veps}$. Conditional on the $\sigma$-algebra generated by $\mathcal{D}_{\mathrm{train}}$ and $\boldsymbol{\varepsilon}$, both the local scale mapping $\mathbf{a} \mapsto \sigma(\mathbf{a})$ defined in~\eqref{eq:sigma_def} and the nonconformity score function
	\[
	S(\mathbf{a}, \mathbf{u}) = \max_{k \in \{1,\dots,d_{\mathrm{out}}\}} \frac{|\mathbf{u}^k - \widehat{\mathcal{G}}(\mathbf{a})^k|}{\sigma(\mathbf{a})^k}
	\]
	are deterministic. In particular, they do not depend on the calibration or test data.
	
	Define the calibration scores $S_i = S(\mathbf{a}_i, \mathbf{u}_i)$ for $i \in \mathcal{I}_{\mathrm{cal}}$ and the test score $S_{\mathrm{new}} = S(\mathbf{a}_{\mathrm{new}}, \mathbf{u}_{\mathrm{new}})$. Because the augmented collection $\{(\mathbf{a}_i, \mathbf{u}_i)\}_{i \in \mathcal{I}_{\mathrm{cal}} \cup \{\mathrm{new}\}}$ is exchangeable and the score function $S$ is identical and fixed for each pair, the resulting scores $\{S_i\}_{i \in \mathcal{I}_{\mathrm{cal}}} \cup \{S_{\mathrm{new}}\}$ are exchangeable. Let $n = |\mathcal{I}_{\mathrm{cal}}|$, and let $S_{(1)} \le \cdots \le S_{(n)}$ denote the ordered calibration scores. The split-conformal quantile is constructed as $\widehat{Q}_{1-\alpha} = S_{(\lceil (n+1)(1-\alpha) \rceil)}$ as in~\eqref{eq:quantile_def}. By Theorem~1 of \citet{angelopoulos2021gentle} we have
	\[
	\Prob\!\left( S_{\mathrm{new}} \le \widehat{Q}_{1-\alpha} \,\big|\, \mathcal{D}_{\mathrm{train}}, \boldsymbol{\varepsilon} \right) \ge 1 - \alpha.
	\]
	By the definition of the max-type score, the event $S_{\mathrm{new}} \le \widehat{Q}_{1-\alpha}$ is equivalent to the system of inequalities
	\[
	\frac{|\mathbf{u}_{\mathrm{new}}^k - \widehat{\mathcal{G}}(\mathbf{a}_{\mathrm{new}})^k|}{\sigma(\mathbf{a}_{\mathrm{new}})^k} \le \widehat{Q}_{1-\alpha}, \qquad k = 1, \dots, d_{\mathrm{out}}.
	\]
	Rearranging each inequality yields
	\[
	\mathbf{u}_{\mathrm{new}}^k \in \left[ \widehat{\mathcal{G}}(\mathbf{a}_{\mathrm{new}})^k - \widehat{Q}_{1-\alpha}\,\sigma(\mathbf{a}_{\mathrm{new}})^k, \; \widehat{\mathcal{G}}(\mathbf{a}_{\mathrm{new}})^k + \widehat{Q}_{1-\alpha}\,\sigma(\mathbf{a}_{\mathrm{new}})^k \right], \qquad \forall k.
	\]
	This is precisely the simultaneous containment event $\mathbf{u}_{\mathrm{new}}^k \in \mathcal{C}(\mathbf{a}_{\mathrm{new}})^k$ for all $k$, where $\mathcal{C}(\mathbf{a}_{\mathrm{new}})$ is defined in~\eqref{eq:final_bands}. Integrating over the conditioning variables $\mathcal{D}_{\mathrm{train}}$ and $\boldsymbol{\varepsilon}$ via the law of total probability preserves the inequality, giving the unconditional guarantee
	\[
	\Prob\!\left( \mathbf{u}_{\mathrm{new}}^k \in \mathcal{C}(\mathbf{a}_{\mathrm{new}})^k, \; \forall k \in \{1,\dots,d_{\mathrm{out}}\} \right) \ge 1 - \alpha.
	\]
\end{proof}

Theorem \ref{thm:simultaneous_coverage} ensures that, with probability at least $1-\alpha$, the entire predicted trajectory is simultaneously covered on the discrete spatiotemporal evaluation grid. We summarize the resulting procedure in Algorithm~\ref{alg:cp_rewrite}.

\begin{algorithm}[t]
    \caption{Perturbation-based split conformal prediction for neural operators}\label{alg:cp_rewrite}
    \begin{algorithmic}[1]
        \Require Proper training set $\{(\mathbf{a}_i,\mathbf{u}_i)\}_{i\in\mathcal{I}_{\mathrm{train}}}$, calibration set $\{(\mathbf{a}_i,\mathbf{u}_i)\}_{i\in\mathcal{I}_{\mathrm{cal}}}$, miscoverage level $\alpha$, smoothing window $K$
        \State Train base operator $\Ghat$ on $\{(\mathbf{a}_i,\mathbf{u}_i)\}_{i\in\mathcal{I}_{\mathrm{train}}}$
        \State Form perturbed labels $\widetilde{\mathbf{u}}_i = \mathbf{u}_i + \boldsymbol{\veps}_i$ via \eqref{eq:perturbed_labels}
        \State Train perturbed operator $\widehat{\mathcal{G}}_{\veps}$ on $\{(\mathbf{a}_i,\widetilde{\mathbf{u}}_i)\}_{i\in\mathcal{I}_{\mathrm{train}}}$
        \State Compute the floor $\tau_0$ via \eqref{eq:sigma_def} on the proper training set
        \For{$i \in \mathcal{I}_{\mathrm{cal}}$}
        \State $\hat{\mathbf{u}}_i \gets \Ghat(\mathbf{a}_i)$; $\mathbf{r}_i \gets \mathbf{u}_i - \hat{\mathbf{u}}_i$
        \State Compute $d(\mathbf{a}_i)$ via \eqref{eq:raw_disagreement}
        \State Smooth $d(\mathbf{a}_i)$ spatially to obtain $\bar d(\mathbf{a}_i)$ via \eqref{eq:smoothed_disagreement}
        \State Compute $\sigma(\mathbf{a}_i)$ via \eqref{eq:sigma_def}
        \State Compute $S_i$ via \eqref{eq:jacobian_score}
        \EndFor
        \State $\widehat{Q}_{1-\alpha} \gets S_{(\lceil (n+1)(1-\alpha)\rceil)}$
        \State \textbf{Inference:} for a new input $\mathbf{a}_{\mathrm{new}}$, compute $\hat{\mathbf{u}}_{\mathrm{new}} = \Ghat(\mathbf{a}_{\mathrm{new}})$, compute $\sigma(\mathbf{a}_{\mathrm{new}})$ by the same perturbation/smoothing/floor procedure, and output bands \eqref{eq:final_bands}
\end{algorithmic}
\end{algorithm}
\subsection{Practical Implementation}
In the experiments, we parameterize both $\widehat{\mathcal{G}}$ and $\widehat{\mathcal{G}}_{\veps}$ with the same Fourier Neural Operator architecture \citep{li2020fourier}. The input is a stack of $\Tin=10$ vorticity snapshots together with the two spatial coordinates, and the output is a stack of $\Tout=10$ future vorticity fields on the same $64\times 64$ grid. In our implementation, the lifting layer maps $\mathbb{R}^{\Tin+2}$ to $\mathbb{R}^{32}$, the operator contains four spectral convolution blocks retaining the first $12$ Fourier modes in each spatial dimension, and the projection head is given by
\begin{equation}\label{eq:fno_layer}
  \mathbb{R}^{32}
  \longrightarrow
  \mathbb{R}^{128}
  \longrightarrow
  \mathbb{R}^{\Tout},
\end{equation}
with GELU \citep{hendrycks2016gaussian} activations and dropout rate $0.1$ in the base model. Both operators are trained with Adam\citep{kingma2015adam}, StepLR decay, batch size $10$, and relative $L^{2}$ loss. 



\section{Experiments}\label{sec:expt}
In this section, we evaluate the perturbation-based conformal method on the standard 2D incompressible Navier--Stokes vorticity benchmark.
The experimental emphasis is the \emph{data-scarce regime}: all methods are given the same total label budget, and methods that require an additional uncertainty model must divide this limited budget across several roles.
An anonymized code package for reproducing the experiment results will be made available as supplementary material.
\subsection{Dataset and experimental protocol}\label{sec:dataset_protocol}
The dataset consists of $1200$ Navier--Stokes vorticity trajectories with viscosity $\nu=10^{-5}$ on a $64\times64$ periodic spatial grid. Each sample contains $\Tin=10$ input vorticity frames and $\Tout=10$ future output frames, so each prediction has
\[
    d_{\mathrm{out}}=64\times64\times10=40{,}960
\]
output coordinates. We compare the following five uncertainty quantification procedures.
\begin{itemize}
    \item \textbf{MC Dropout.} Following \citet{gal2016dropout}, dropout remains active at inference. We estimate the predictive mean and variance from $20$ independent stochastic forward passes and use the resulting standard deviation as the local scale in the conformal score.
    \item \textbf{Laplace.} We apply a diagonal last-layer Laplace approximation to the trained FNO \citep{daxberger2021laplace,mackay1992practical,ritter2018scalable,magnaniKERH25}. The posterior variance, computed from the feature representation of each input, serves as the local uncertainty scale.
    \item \textbf{UQNO.} Following \citet{ma2024calibrated}, we train an auxiliary quantile FNO on a disjoint training subset to estimate the local $0.99$-quantile of the absolute residual. This quantile estimate is then normalized via the standard split-conformal calibration step.
    \item \textbf{Perturbation.} The proposed method (Section \ref{sec:method}). The local scale is derived from the spatially smoothed pointwise disagreement between a base FNO and a counterpart trained on Gaussian-perturbed labels, without requiring a separate uncertainty network.
    \item \textbf{Unscaled.} A baseline conformal procedure that sets $\sigma \equiv 1$ in \eqref{eq:normalized_score_template}, yielding constant-radius bands. The base operator $\widehat{\mathcal{G}}$ is trained identically to the perturbation method to isolate the effect of adaptive scaling.
\end{itemize}

For MC Dropout, Laplace approximation, Perturbation and Unscaled, we use $800$ trajectories for training, $200$ trajectories for calibration, and $200$ trajectories for testing. For UQNO, we partition the $800$ training trajectories into two disjoint subsets of size $400$: one trains the base predictive operator, and the other trains the auxiliary quantile network that estimates the local uncertainty scale.


After training, we pool the $200$ calibration and $200$ test trajectories and perform $1000$ independent random partitions into calibration and test sets of size $200$. All methods are evaluated on the identical sequence of splits. For a fixed miscoverage level $\alpha$, we report two aggregate metrics across the reshuffles: simultaneous coordinate coverage and average conformal radius. Simultaneous coverage measures the proportion of test trajectories for which the entire predicted spatiotemporal field falls within the conformal band:
\begin{equation}\label{eq:simul_metric}
\mathrm{Cov}_{\mathrm{sim}}
:=
\frac{1}{|\mathcal{I}_{\mathrm{test}}|}
\sum_{i\in\mathcal{I}_{\mathrm{test}}}
\mathbb{I}\bigl\{\mathbf{u}_i^{k}\in\mathcal{C}(\mathbf{a}_i)^{k} \text{ for all } k\in\{1,\dots,d_{\mathrm{out}}\}\bigr\}.
\end{equation}
This is the validity notion targeted by the max-type nonconformity score. Efficiency is quantified by the average conformal radius, defined as the mean half-width of the prediction band across all output coordinates:
\begin{equation}\label{eq:radius_metric}
\mathrm{Rad}(\mathbf{a}_i)
:=
\frac{1}{d_{\mathrm{out}}}
\sum_{k=1}^{d_{\mathrm{out}}}
\widehat{Q}_{1-\alpha}\,\sigma(\mathbf{a}_i)^{k}.
\end{equation}
The reported efficiency metric is the sample mean $\frac{1}{|\mathcal{I}_{\mathrm{test}}|}\sum_{i\in\mathcal{I}_{\mathrm{test}}}\mathrm{Rad}(\mathbf{a}_i)$; the full interval width equals twice this value. Standard errors are computed across the $1000$ reshuffles as $\mathrm{SE}=\mathrm{std}/\sqrt{1000}$ and reflect split-resampling variability conditional on the fixed trained models.

\subsection{Experiment Results}\label{sec:results}
Table~\ref{tab:uq} summarizes the main quantitative comparison. Across all reported significance levels, Perturbation achieves the smallest average conformal radius while maintaining empirical simultaneous coverage close to the nominal target. The deviations from the target level are on the order of the reported Monte Carlo standard errors. At $\alpha=0.04$, for example, the perturbation method has average radius $3.786\pm0.004$, compared with $4.192\pm0.002$ for Unscaled, $4.403\pm0.003$ for Laplace, $8.530\pm0.015$ for MC Dropout, and $16.784\pm0.091$ for UQNO. These results reflect the advantage of the perturbation construction under a fixed label budget: both the base and perturbed operators are trained on the full proper training set, whereas UQNO partitions the same data between a base predictor and an auxiliary uncertainty model.

\begin{table}[t]
    \centering
    \small
    \resizebox{\textwidth}{!}{%
    \begin{tabular}{lccccc}
        \toprule
        & $\alpha=0.02$ & $\alpha=0.04$ & $\alpha=0.06$ & $\alpha=0.08$ & $\alpha=0.10$ \\
        & target $98\%$ & target $96\%$ & target $94\%$ & target $92\%$ & target $90\%$ \\
        \midrule
        Unscaled cov. (\%) & 98.00 & 96.04 & 94.02 & 92.04 & 90.13 \\
        Unscaled rad. & $4.353\pm0.003$ & $4.192\pm0.002$ & $4.084\pm0.002$ & $3.986\pm0.002$ & $3.908\pm0.002$ \\
        \addlinespace
        MC Dropout cov. (\%) & 97.96 & 95.95 & 93.94 & 91.94 & 89.98 \\
        MC Dropout rad. & $10.289\pm0.029$ & $8.530\pm0.015$ & $7.671\pm0.012$ & $7.169\pm0.008$ & $6.875\pm0.005$ \\
        \addlinespace
        Laplace cov. (\%) & 97.95 & 95.97 & 94.01 & 92.07 & 90.09 \\
        Laplace rad. & $4.563\pm0.002$ & $4.403\pm0.003$ & $4.261\pm0.003$ & $4.143\pm0.003$ & $4.052\pm0.003$ \\
        \addlinespace
        UQNO cov. (\%) & 98.09 & 96.13 & 94.14 & 92.18 & 90.16 \\
        UQNO rad. & $27.210\pm0.154$ & $16.784\pm0.091$ & $12.605\pm0.060$ & $9.866\pm0.044$ & $8.030\pm0.037$ \\
        \addlinespace
        Perturbation cov. (\%) & 97.99 & 96.01 & 94.00 & 92.00 & 90.08 \\
        Perturbation rad. & $\mathbf{4.169\pm0.006}$ & $\mathbf{3.786\pm0.004}$ & $\mathbf{3.595\pm0.004}$ & $\mathbf{3.434\pm0.004}$ & $\mathbf{3.291\pm0.004}$ \\
        \bottomrule
    \end{tabular}}
    \caption{Empirical simultaneous-in-coordinate coverage and average conformal radius over $1000$ reshuffles of the calibration/test pool. All methods use the same total label budget. Values are mean $\pm$ standard error. The radius is the average half-width of the conformal band; the full interval width is twice this value.}
    \label{tab:uq}
\end{table}

Table~\ref{tab:relative_improvements} reports the corresponding relative radius reduction of the perturbation method compared with each baseline. The improvement is most pronounced against UQNO and MC Dropout, reflecting respectively the data-splitting penalty of training a separate uncertainty network and the high variance of dropout-based uncertainty estimates. The improvement over the unscaled and Laplace baselines is smaller but still systematic at the reported levels.

\begin{table}[t]

    \centering
    \small
    \begin{tabular}{lccc}
        \toprule
        Baseline & $\alpha=0.02$ & $\alpha=0.04$ & $\alpha=0.06$ \\
        \midrule
        vs. Unscaled & $4.2\%$ & $9.7\%$ & $12.0\%$ \\
        vs. Laplace & $8.6\%$ & $14.0\%$ & $15.6\%$ \\
        vs. MC Dropout & $59.5\%$ & $55.6\%$ & $53.1\%$ \\
        vs. UQNO & $84.7\%$ & $77.4\%$ & $71.5\%$ \\
        \bottomrule
    \end{tabular}
    \caption{Relative radius reduction of the perturbation method, computed as $(\mathrm{Rad}_{\mathrm{baseline}}-\mathrm{Rad}_{\mathrm{ours}})/\mathrm{Rad}_{\mathrm{baseline}}\times100$.}
    \label{tab:relative_improvements}
\end{table}

\subsection{Ablation: smoothing and floor stabilization}\label{sec:ablation_smoothing_floor}
The perturbation method applies two post-processing steps to the raw disagreement $d(\mathbf a)^k=|\widehat{\mathcal G}(\mathbf a)^k-\widehat{\mathcal G}_{\veps}(\mathbf a)^k|$: spatial smoothing with a $K\times K$ averaging window and floor stabilization $\sigma(\mathbf a)^k\ge\tau_0$. Both are intended to prevent isolated near-zero denominator values from inflating the max-type calibration score. Table~\ref{tab:smoothing_ablation} tests four configurations while keeping the trained models fixed. The ablation shows that spatial smoothing is the critical component. At $\alpha=0.04$, removing the floor while retaining the $15\times15$ smoothing window changes the average radius only from $3.786$ to $3.789$. In contrast, removing smoothing while retaining the floor increases the average radius to $83.616$. Removing both smoothing and the floor makes the method essentially unusable, producing astronomically large radii. Coverage remains close to the conformal target because validity does not depend on the quality of the scale, but efficiency depends strongly on stabilizing the denominator.

\begin{table}[!ht]
    \centering
    \footnotesize
    \begin{tabular}{lccc}
        \toprule
        Configuration & $\alpha=0.02$ & $\alpha=0.04$ & $\alpha=0.06$ \\
        \midrule
        Full ($K=15$, $\tau_0>0$) & $4.181\pm0.006$ & $3.786\pm0.004$ & $3.596\pm0.004$ \\
        No smoothing ($K=1$, $\tau_0>0$) & $87.573\pm0.058$ & $83.616\pm0.069$ & $80.430\pm0.050$ \\
        No floor ($K=15$, $\tau_0=0$) & $4.173\pm0.006$ & $3.789\pm0.004$ & $3.592\pm0.004$ \\
        Neither ($K=1$, $\tau_0=0$) & $2.1\times10^{7}$ & $2.0\times10^{5}$ & $1.4\times10^{5}$ \\
        \bottomrule
    \end{tabular}
    \caption{Ablation of spatial smoothing and floor stabilization. Values are average conformal radii, reported as mean $\pm$ standard error over $1000$ reshuffles when available. The conformal coverage remains near the nominal target in all cases, but the efficiency of the bands is dramatically affected by smoothing.}
    \label{tab:smoothing_ablation}
\end{table}

\subsection{Sensitivity to perturbation noise}\label{sec:noise_sensitivity}
The label-noise scale $\sigma_{\veps} = c \cdot \mathrm{std}(\mathbf{u}_{\mathrm{train}})$ is the primary hyperparameter governing the perturbation procedure. We evaluate the sensitivity of the conformal bands to $c \in \{0.01, 0.02, 0.05, 0.10, 0.20\}$ by independently retraining the perturbed FNO for each value, while keeping the base operator, spatial smoothing window, and stabilization floor fixed. 
\begin{table}[!ht]
    \centering
    \footnotesize
    \begin{tabular}{lcccccc}
        \toprule
        & \multicolumn{2}{c}{$\alpha=0.02$} & \multicolumn{2}{c}{$\alpha=0.04$} & \multicolumn{2}{c}{$\alpha=0.06$} \\
        \cmidrule(lr){2-3}\cmidrule(lr){4-5}\cmidrule(lr){6-7}
        $c$ & Cov. (\%) & Rad. & Cov. (\%) & Rad. & Cov. (\%) & Rad. \\
        \midrule
        $0.01$ & 98.05 & $3.995\pm0.004$ & 96.05 & $3.786\pm0.003$ & 94.07 & $3.670\pm0.003$ \\
        $0.02$ & 98.02 & $3.507\pm0.006$ & 96.08 & $3.248\pm0.003$ & 94.09 & $3.089\pm0.003$ \\
        $0.05$ & 98.07 & $3.734\pm0.005$ & 96.13 & $3.407\pm0.003$ & 94.12 & $3.278\pm0.002$ \\
        $0.10$ & 98.03 & $3.965\pm0.007$ & 96.08 & $3.509\pm0.004$ & 94.11 & $3.337\pm0.002$ \\
        $0.20$ & 98.00 & $3.492\pm0.004$ & 95.92 & $3.266\pm0.003$ & 93.92 & $3.112\pm0.002$ \\
        \bottomrule
    \end{tabular}
    \caption{Sensitivity to the perturbation-noise multiplier $c=\sigma_{\veps}/\mathrm{std}(\mathbf{u}_{\mathrm{train}})$. Values are empirical coverage and average conformal radius over $1000$ reshuffles.}
    \label{tab:noise_sensitivity}
\end{table}

Table~\ref{tab:noise_sensitivity} shows that the method is robust across this range of perturbation magnitudes. At $\alpha=0.04$, the average radius ranges from $3.248$ to $3.786$, a variation of about $16.6\%$, and coverage stays close to the nominal target. Small perturbations can produce a weak disagreement signal relative to training stochasticity, while large perturbations may begin to distort the perturbed model. In practice, the range $c\in[0.02,0.10]$ appears safe for this benchmark.

\subsection{Qualitative structure and diagnostics}\label{sec:vis_diagnostics}
To complement the aggregate metrics, we examine the spatial distribution of the conformal bands to verify that the perturbation scale captures physically structured uncertainty. Figure~\ref{fig:vis} illustrates this analysis for a representative test trajectory. The right panel displays the perturbation-derived radius $Q_{1-\alpha}\sigma(\mathbf{a})$. The uncertainty is not spatially uniform: it widens around vortex cores, shear layers, and complicated interactions, while remaining comparatively small in smoother background regions. 


\begin{figure}[!ht]
    \centering
    \IfFileExists{fig_vis_uq.png}
    {\includegraphics[width=0.5\textwidth]{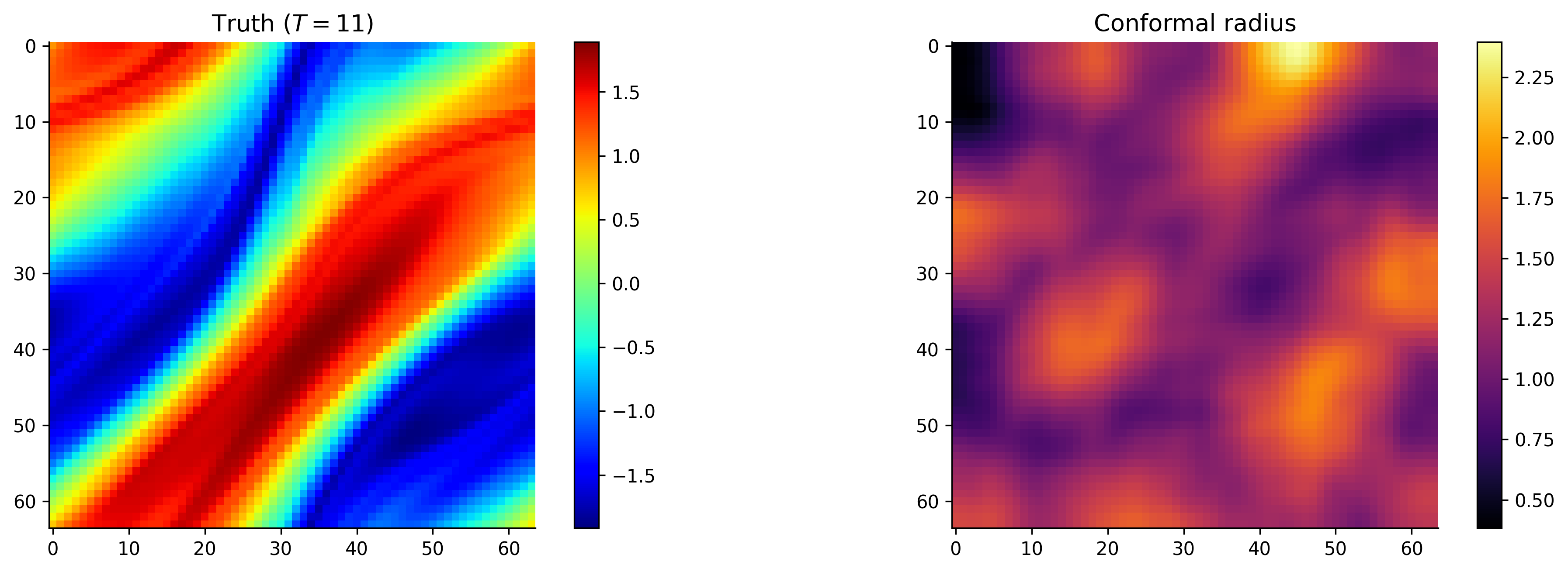}}{%
    \fbox{\parbox{0.92\textwidth}{\centering Placeholder for
    the qualitative perturbation-based uncertainty figure.
    Insert the final visualization here.}}}
    \caption{\textbf{Qualitative perturbation-based uncertainty
    map.} Left: ground-truth vorticity for a representative
    test trajectory. Right: conformal radius $Q_{1-\alpha}\sigma(a)$
    derived from the disagreement between the base and perturbed
    operators.}
    \label{fig:vis}
\end{figure}

\begin{figure}[!ht]
    \centering
    \IfFileExists{fno_result_64x64.jpg}{\includegraphics[width=0.9\textwidth]{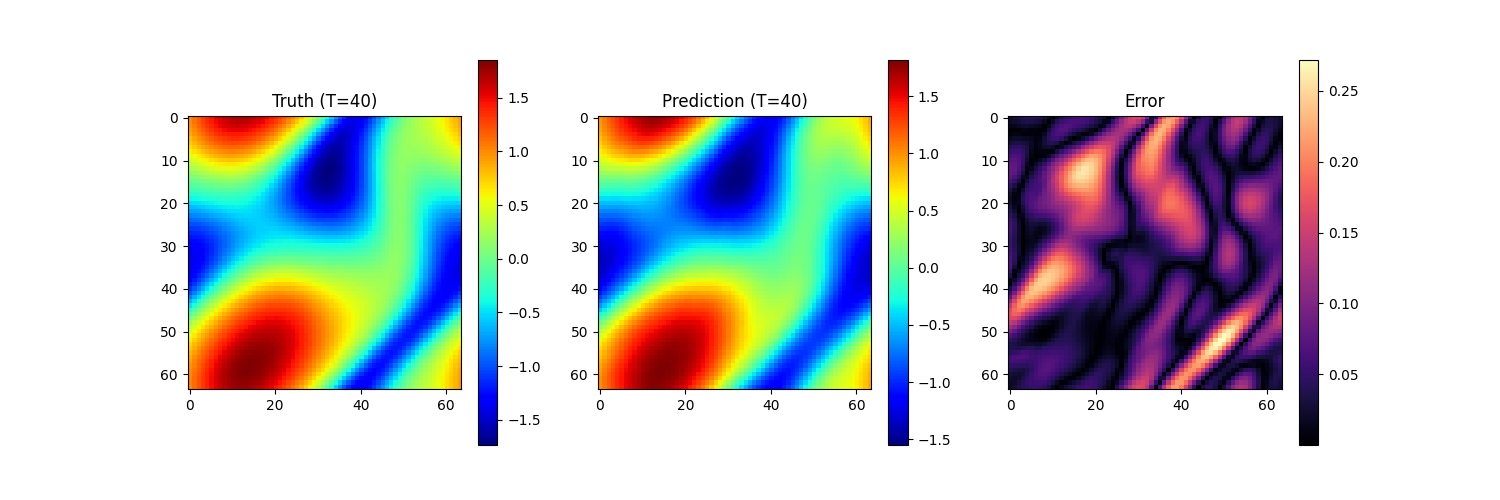}}{%
    \fbox{\parbox{0.92\textwidth}{\centering {Placeholder for the base-operator performance figure. Insert the final FNO visualization here.}}}}
    \caption{\textbf{Base operator performance check.} Left: ground-truth vorticity field for a representative test sample. Center: FNO prediction. Right: absolute error.}
    \label{fig:base_model}
\end{figure}

Figure~\ref{fig:base_model} illustrates the predictive accuracy of the base FNO. The absolute residual map indicates that prediction errors are localized, whereas the dominant flow structures are accurately recovered. This baseline accuracy ensures that the conformal bands quantify spatially varying predictive uncertainty, rather than compensating for large systematic errors in the surrogate.

\FloatBarrier

\section{Conclusion}\label{sec:conclusion}

We proposed a perturbation-based conformal prediction framework for uncertainty quantification in neural operator learning. By comparing a base predictor with a counterpart trained on perturbed labels, we obtain a spatially adaptive uncertainty scale that can be conformalized to yield finite-sample simultaneous coverage on the output grid. On the 2D Navier--Stokes benchmark, this construction is particularly effective in the data-scarce regime, where it outperforms MC Dropout, Laplace approximation, and UQNO under matched total label budgets.

More broadly, the perturbation-based scale offers a general mechanism for constructing adaptive nonconformity scores in regression problems. The construction requires only a base predictor, a second predictor trained on perturbed targets, and a pointwise disagreement measure between the two. This procedure does not rely on the PDE structure, the function-space formulation, or any architecture-specific feature; it applies whenever one can train two regression models on nearly identical data and wishes to calibrate prediction bands that widen where the learned response is sensitive to training noise. The conformalization step remains standard split conformal prediction, so the marginal coverage guarantee holds under the usual exchangeability assumption. By decoupling uncertainty calibration from auxiliary modeling, the perturbation-based score provides a data efficient alternative for conformalized regression across diverse output geometries. 




\section*{Acknowledgments}
WW would like to thank the Simons Foundation for their support through the Travel Support for Mathematicians program (No.\ 0007730). 


\end{document}